\newtheorem{thm}{\protect\theoremname}
\providecommand{\theoremname}{Theorem}
\begin{document}

\title{\textbf{Tilt estimator for 3D non-rigid pendulum based on a tri-axial
accelerometer and gyrometer}}

\author{Mehdi Benallegue, Abdelaziz Benallegue, Yacine Chitour \thanks{ M. Benallegue is with Humanoid Research Group, National Institute of Advanced Industrial Science and Technology (AIST), Tsukuba, Ibaraki, Japan. A. Benallegue is with JRL-AIST (Joint Robotics Laboratory), Tsukuba, Ibaraki, Japan and Laboratoire d'Ing\'enierie des Syst\`emes de Versailles, France and Y. Chitour is with Universit\'e Paris-Sud, CentraleSup\'elec, CNRS, France.       {\tt\small benalleg@lisv.uvsq.fr, mehdi.benallegue@aist.go.jp, yacine.chitour@lss.supelec.fr} } 
}
\maketitle
\begin{abstract}
The paper presents a new observer for tilt estimation of a 3-D non-rigid
pendulum. The system can be seen as a multibody robot attached to
the environment with a ball joint for which there is no sensor. The
estimation of tilt, i.e. roll and pitch angles, is mandatory for balance
control for a humanoid robot and all tasks requiring verticality.
Our method obtains tilt estimations using joints encoders and inertial
measurements given by an IMU equipped with tri-axial accelerometer
and gyrometer mounted in any body of the robot. The estimator takes
profit from the kinematic coupling resulting from the pivot constraint
and uses the entire signal of accelerometer including linear accelerations.
Almost Global Asymptotic convergence of the estimation errors is proven
together with local exponential stability. The performance of the
proposed observer is illustrated by simulations. 
\end{abstract}

\section{Introduction}

One predominant goal of robotics is to be able to perform versatile
interactions with the environment. In some cases, contact point constitute
a link between the floating base of the robot and the environment,
one example is legged locomotion, but also environment-related tasks
such as torquing or drilling. Most of these tasks require the contact
point to remain at a precise position and not to detach or slip. The
observance of such a constraint generates a kinematic coupling allowing
to model the robot as a kinematic chain attached to the environment
with an unactuated joint. This can be simply summarized as a pendulum
with the contact as the pivot point. 

One main issue regarding this class of systems is that beside the
unactuation, there is usually no direct measurement of the configuration
of this pivot. Of course, properly estimating this configuration is
of crucial importance in most tasks. Nevertheless, several kinds of
sensors are sensitive to this configuration, and may be used to estimate
it. The most broadly used ones are tri-axial accelerometer and gyrometer.
This set of sensors provides invariant signals relative to different
rotations around the gravitational field direction. This means that
this orientation, usually called yaw angle, is not observable using
this sensing system~\cite{benallegue2014humanoids}.

\begin{figure}
\begin{centering}
\includegraphics[width=0.6\columnwidth]{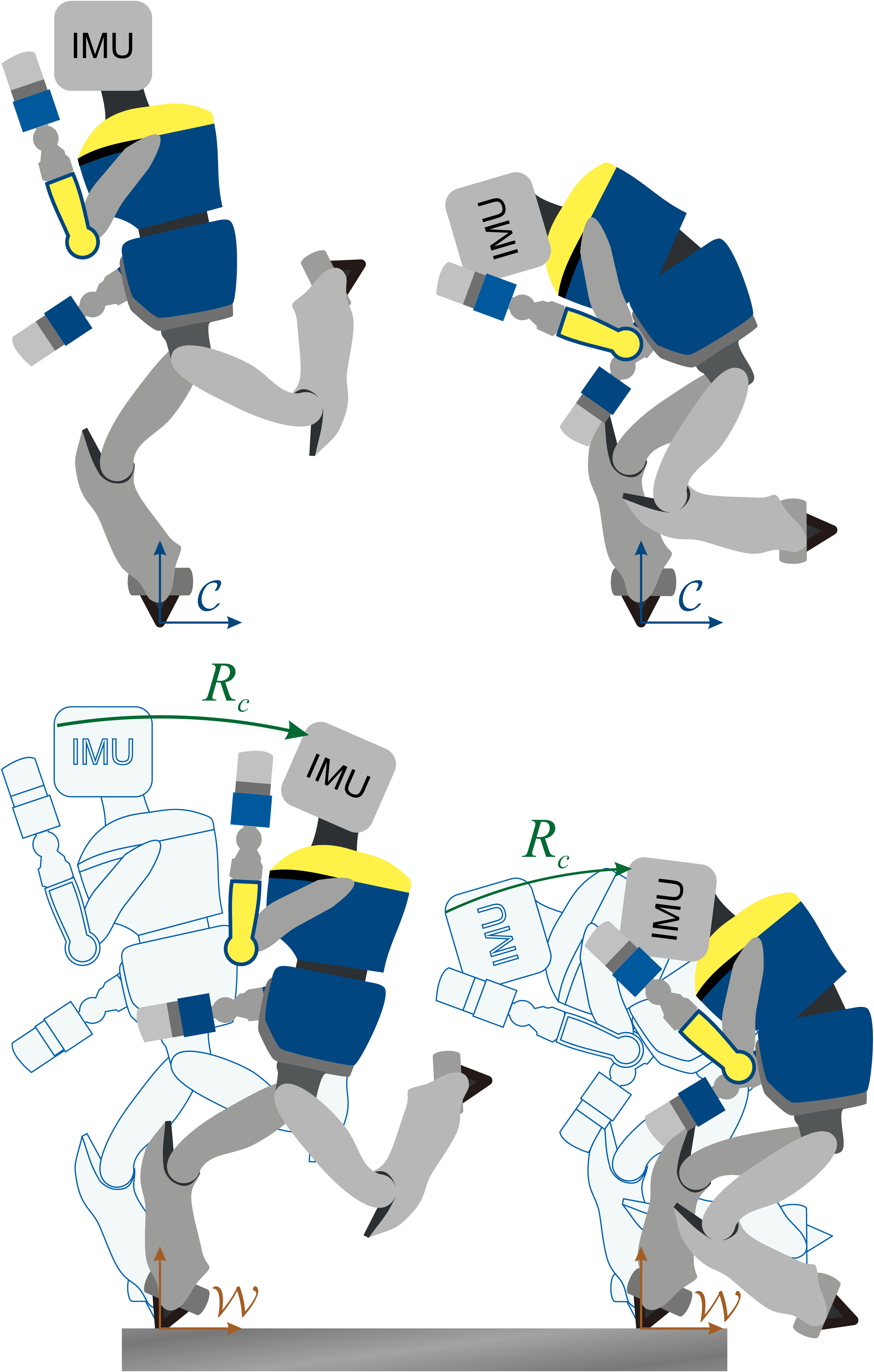}
\par\end{centering}
\caption{Top figures: the robot is attached to the environment through a 3D
pivot joint. The encoders only provide the configuration in a local
frame called control frame and represented by $\mathcal{C}$. Bottom
figures: In the world frame $\mathcal{W}$ the configuration of the
real robot is rotated compared to its value in the control frame (light
wireframe robot). Only the IMU can provide data about this rotation,
$R_{c}$. We need to account for encoders in this estimation, to distinguish
local kinematics from pivot positions. For example the rotation $R_{c}$
is here identical in the right and the left lower figures with different
IMU configurations.\label{fig:sketch}}
\end{figure}

Nevertheless, in robotics there is often a specific need for a precise
estimation of the two other degrees of freedom, which can be referred
to as roll and pitch angles, or simply tilt. These two degrees of
freedom describe the configuration of the pivot relative to the gravitational
field. They are then essential for maintaining balance for humanoid
robots \cite{Wieber2016}.

In this paper we provide a state estimator which aims at addressing
this problem by providing a state estimator for the tilt of a pendulum
which uses an accelerometer and a gyrometer, (i) without neglecting
linear accelerations compared to gravity, (ii) well suited for articulated
robot, and (iii) with a proven Lyapunov stability. Furthermore, this
estimator reaches local exponential stability performances, which
makes it particularly suitable for the use as a state feedback for
closed-loop control. The idea behind this estimator is close to the
recent development of Hua \emph{et al }\cite{Hua2016automatica},
who propose a tilt estimation when having a velocity measurement.
The gyrometer and the kinematics constitute the velocity measurement
in our case. But the estimator we develop is simpler and has better
convergence properties in both theoretical and simulation points of
view.

The section~\ref{sec:Problem-statement} presents the issue treated
in this paper together with the model of the system and the sensors.
The section~\ref{sec:State-estimator} presents the development of
the state estimator. The section \ref{sec:Stability-analysis} analyzes
the stability of the estimation error. Section \ref{sec:Simulations}
shows the performances of the estimator in simulation, and finally
the section \ref{sec:Discussion-and-conclusion} discusses the results
and the properties of this estimator.

\section{Problem statement\label{sec:Problem-statement}}

The system is a robot linked to the environment through a ball joint
called pivot. Without loss of generality we consider that the pivot
is located at the origin of the inertial world frame ($\mathcal{W}$).
The configuration of the pivot is a pure 3D rotation describing a
transformation between the global frame and the local frame of the
robot, also called control frame ($\mathcal{C}$). We represent this
rotation by the rotation matrix $R_{c}\in\mathbb{R}^{3\times3}$.
For instance, the sensor $s$ located at position $^{c}p_{s}\in\mathbb{R}^{3}$
and orientation $^{c}R_{s}\in\mathbb{R}^{3\times3}$ in $\mathcal{C}$
is actually at $p_{s}=R_{c}{}^{c}p_{s}$ and has the orientation $R_{s}=R_{c}{}^{c}R_{s}$
in $\mathcal{W}$. This problem is sketched in Figure~\ref{fig:sketch}.

There is no sensor providing the pivot configuration. Instead, the
robot is equipped with an IMU consisting in an accelerometer and a
gyrometer, both of them are on three axes. The position of this IMU
may be not rigidly linked to the ball joint, and can be located in
another body of the robot. Since the robot can modify its actuated
joint kinematics the IMU may move in the control frame. Therefore,
we have to consider its position $\,^{c}p_{s}\in\mathbb{R}^{3}$,
its orientation represented by the orthogonal matrix $\,^{c}R_{s}\in\mathbb{R}^{3\times3}$,
together with their respective first-order time-derivatives $\,^{c}\dot{p}_{s}\in\mathbb{R}^{3}$
and $\,^{c}\omega_{s}\in\mathbb{R}^{3}$ such that $\,^{c}\dot{R}_{s}=S(^{c}\omega_{s})\,^{c}R_{s}$,
where $S$ is the skew-symmetric operator. %

The values of $\,^{c}\dot{p}_{s}$, $\,^{c}R_{s}^{T}$ , $\,^{c}\dot{p}_{s}$
and $\,^{c}\omega_{s}$ can be obtained through the positions and
velocities of the joint encoders and are often the outcome of a motion
controller. Therefore, these values are considered to be perfectly
known.

The accelerometer provides the sum of the gravitational field and
the linear acceleration of the sensor, expressed in the sensor frame.
In other words we have 
\begin{equation}
y_{a}=\,R_{s}^{T}\left(g_{0}e_{z}+\ddot{p}_{s}\right),\label{eq:measure-accelero-0}
\end{equation}
 where $y_{a}$, $p_{s}$, $R_{s}$, $g_{0}$ and $e_{z}$ are respectively
the accelerometer measurements, the position and the orientation of
the IMU, standard gravity constant and a unit vector such that $-g_{0}e_{z}$
is the gravitational field.

The gyrometer provides the angular velocity of the IMU, expressed
in the sensor frame. In other words 
\begin{equation}
y_{g}=\,R_{s}^{T}\omega_{s},\label{eq:measure-gyro0}
\end{equation}
 where $\omega_{s}$ is the angular velocity vector of the sensor
in the global frame such that $\dot{R}_{s}=S(\omega_{s})\,R_{s}$. 

We can see from these equations that the measurements are invariant
regarding rotations around the vector $e_{z}$ aligned with the gravitational
field. Therefore, the orientation that can be estimated through this
sensing system is incomplete. Nevertheless, we show here that one
partial information is observable and consists in $R_{c}^{T}e_{z}$,
the direction of the gravitational field in the local frame of the
robot. This data is the most important variable required to control
balance and may be considered as a measure of ``verticality'' in general.

By replacing $R_{s}$ and $p_{s}$ by $R_{c}{}^{c}R_{s}$ and $R_{c}{}^{c}p_{s}$
respectively and performing time-derivations and identification with
(\ref{eq:measure-gyro0}) and (\ref{eq:measure-accelero-0}) obtain
\begin{align}
y_{g}= & \,^{c}R_{s}^{T}\,^{c}\omega_{s}+\,^{c}R_{s}^{T}R_{c}^{T}\omega_{c},\label{eq:measure-gyro1}\\
y_{a}= & \,^{c}R_{s}^{T}\left(\left(S(R_{c}^{T}\dot{\omega}_{c})+S^{2}(R_{c}^{T}\omega_{c})\right)\,^{c}p_{s}+2S(R_{c}^{T}\omega_{c})\,^{c}\dot{p}_{s}\right)\nonumber \\
 & +\,^{c}R_{s}^{T}{}^{c}\ddot{p}_{s}+g_{0}\,^{c}R_{s}^{T}R_{c}^{T}e_{z}.\label{eq:measure-accelero-1}
\end{align}
where $\omega_{c}$ is the angular velocity vector of the pendulum
such that $\dot{R}_{c}=S(\omega_{c})\,R_{c}$.

In the following section we develop the state observer for the estimation
of $R_{c}^{T}e_{z}$.

\section{State estimator\label{sec:State-estimator}}

\subsection{State definition}

The first variable we define is the pivot angular velocity expressed
in the control frame $y_{1}=R_{c}^{T}\omega_{c}$. Replacing this
in (\ref{eq:measure-gyro1}) we have 
\begin{align}
y_{1}= & ^{c}R_{s}\left(y_{g}-{}^{c}R_{s}^{T}\,^{c}\omega_{s}\right),\label{eq:y1expression}
\end{align}
 and since all the rightmost variables are known or measured we may
consider $y_{1}$ as measured.

Let's define also the following state variables:
\begin{eqnarray}
x_{1} & = & S(^{c}p_{s})y_{1}-\,^{c}\dot{p}_{s},\\
x_{2} & = & R_{c}^{T}e_{z},
\end{eqnarray}
with $x_{1}\in\mathbb{R}^{3}$ and $x_{2}\in\mathbb{S}^{2}$, with
the set $\mathbb{S}^{2}\subset\mathbb{R}^{3}$ is the unit sphere
centered at the origin, and defined as 
\[
\mathbb{S}^{2}=\left\{ x\in\mathbb{R}^{3}/\left\Vert x\right\Vert =1\right\} .
\]

The variable $x_{2}$ is the state we aim at estimating and cannot
be obtained algebraically. On the contrary, the variable $x_{1}$
is considered measured since we know $^{c}p_{s}$ , $^{c}\dot{p}_{s}$
and $y_{1}$, and is the opposite of the linear velocity of the IMU
expressed in the local frame $x_{1}=-R_{c}^{T}\dot{p}_{s}$. In this
study we use this data to build a tilt estimator able to distinguish
gravity from accelerations, similarly to\emph{ }\cite{Hua2016automatica}\emph{.} 

By left-multiplying Equation (\ref{eq:measure-accelero-1}) by $^{c}R_{s}$
and replacing the expression of $y_{1}$ of equation (\ref{eq:y1expression})
we get
\begin{align}
 & S(^{c}p_{s})R_{c}^{T}\dot{\omega}_{c}+S(\,^{c}\dot{p}_{s})y_{1}-{}^{c}\ddot{p}_{s}=\nonumber \\
 & \qquad\;\;\;-S(y_{1})\left(S(\,^{c}p_{s})y_{1}-\,^{c}\dot{p}_{s}\right)+g_{0}R_{c}^{T}e_{z}-{}^{c}R_{s}y_{a}.\label{eq:6}
\end{align}

We notice that the left member of equation (\ref{eq:6}) is the first
order time-derivative of $x_{1}$. This, together with the time-differentiation
of $x_{2}$, provide us with the following dynamic equations 
\begin{equation}
\begin{cases}
\dot{x}_{1} & =-S(y_{1})x_{1}+g_{0}x_{2}-{}^{c}R_{s}y_{a},\\
\dot{x}_{2} & =-S(y_{1})x_{2}.
\end{cases}\label{eq:dynamics}
\end{equation}

The system ((\ref{eq:dynamics})) is suitable for the observer synthesis.

\subsection{State-observer and error dynamics:}

In order to estimate $x_{2}=R_{c}^{T}e_{z}$, we propose the following
state-observer
\begin{equation}
\begin{cases}
\dot{\hat{x}}_{1} & =-S(y_{1})\hat{x}_{1}+g_{0}\hat{x}_{2}-{}^{c}R_{s}y_{a}+\alpha(x_{1}-\hat{x}_{1}),\\
\dot{\hat{x}}_{2} & =-S(y_{1}-\beta S(\hat{x}_{2})(x_{1}-\hat{x}_{1}))\hat{x}_{2},
\end{cases}\label{eq:observer}
\end{equation}
where $\alpha$, $\beta$ are positive scalar gains which verify the
condition $\beta g_{0}<\alpha^{2}$ and $\hat{x}_{1}$ and $\hat{x}_{2}$
are the estimations of $x_{1}$ and $x_{2}$ respectively. 

The initial value of $\hat{x}_{2}$ should be in $\mathbb{S}^{2}$.
Then the dynamics of the last equation ensures that the norm of this
vector remains constant in time. The initial value for $\hat{x}_{1}$
on its side could be anywhere in $\mathbb{R}^{3}$.

We define the following estimation errors $\tilde{x}_{1}=x_{1}-\hat{x}_{1}$
and $\tilde{x}_{2}=x_{2}-\hat{x}_{2}$, a time-differentiation of
these expressions provide us with the following error dynamics:
\begin{equation}
\begin{cases}
\dot{\tilde{x}}_{1} & =-S(y_{1})\tilde{x}_{1}-\alpha\tilde{x}_{1}+g_{0}\tilde{x}_{2},\\
\dot{\tilde{x}}_{2} & =-S(y_{1})\tilde{x}_{2}+\beta S^{2}(\hat{x}_{2})\tilde{x}_{1}.
\end{cases}\label{eq:error_dynamics}
\end{equation}

To run the analysis of errors, we set $z_{i}=R_{c}\tilde{x}_{i}$.
We notice also that $\dot{R}_{c}=R_{c}S(R_{c}^{T}\omega_{c})=R_{c}S(y_{1})$
and $R_{c}(\tilde{x}_{2}+\hat{x}_{2})=e_{z}$, we obtain this new
error dynamics 
\begin{equation}
\begin{cases}
\dot{z}_{1} & =-\alpha z_{1}+g_{0}z_{2},\\
\dot{z}_{2} & =\beta S^{2}(e_{z}-z_{2})z_{1}.
\end{cases}\label{eq:error_dynamics1-1}
\end{equation}

The nice property of this new dynamics is that it is autonomous and
defines a time-invariant ordinary differential equation (ODE) which
simplifies drastically the stability analysis. In fact, if one define
the state $\xi:=\left(z_{1},z_{2}\right)$ and the state space $\varUpsilon:=\mathbb{R}^{3}\times\mathbb{S}_{e_{z}}$
with $\mathbb{S}_{e_{z}}=\left\{ z\in\mathbb{R}^{3}|\left(e_{z}-z\right)\in\mathbb{S}^{2}\right\} $,
one can write (\ref{eq:error_dynamics1-1}) as $\dot{\xi}=F\left(\xi\right)$
where $F$ gathers the right-hand side of (\ref{eq:error_dynamics1-1})
and defines a smooth vector field on $\varUpsilon$.

\section{Stability analysis\label{sec:Stability-analysis}}

\subsection{Asymptotic stability}

Let's consider the following positive-definite differentiable function
$V:\varUpsilon\rightarrow\mathbb{R}^{+}$ 
\begin{eqnarray}
V & = & \frac{\left\Vert \alpha z_{1}-g_{0}z_{2}\right\Vert ^{2}}{2}+g_{0}^{2}\frac{\left\Vert z_{2}\right\Vert ^{2}}{2},\label{eq:Lyap-1}
\end{eqnarray}
which is radially unbounded over $\varUpsilon$. 
\begin{thm}
\emph{The time-invariant ODE defined by (\ref{eq:error_dynamics1-1})
verifies the following }
\begin{enumerate}
\item \emph{It admits two equilibrium points namely the origin $(0,0)$
and $(\frac{2g_{0}}{\alpha}e_{z},2e_{z})$.}
\item \emph{All trajectories of (\ref{eq:error_dynamics1-1}) converge to
one of the equilibrium points defined in item 1.}
\item \emph{The equilibrium ($0,0$) is locally asymptotically stable with
a domain of attraction containing the set 
\begin{equation}
V_{c}:=\left\{ \xi=\left(z_{1},z_{2}\right)\in\varUpsilon\mid V\left(\xi\right)<2g_{0}^{2}\right\} .
\end{equation}
}
\item \emph{The system (\ref{eq:error_dynamics1-1}) is almost globally
stable with respect to the origin in the following sense: there exists
an open dense subset $\varUpsilon_{0}\subset\varUpsilon$ such that,
for every initial condition $\xi_{0}\in\varUpsilon_{0}$, the corresponding
trajectory converges asymptotically to ($0,0$).}
\end{enumerate}
\end{thm}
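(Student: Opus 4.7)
The plan is to treat (\ref{eq:error_dynamics1-1}) as an autonomous ODE on $\varUpsilon$ for which $V$ serves as a non-strict Lyapunov function, dispatching items 2--3 by LaSalle and item 4 by a hyperbolic-saddle argument at the spurious equilibrium. For item 1 I would set $\dot z_1=0$, forcing $z_1=\tfrac{g_0}{\alpha}z_2$, and then read from $\dot z_2=0$ together with the identity $S^2(v)x=0\iff x\parallel v$ (for $v\neq 0$) that $z_2\parallel(e_z-z_2)$, hence $z_2\parallel e_z$. The constraint $\|e_z-z_2\|=1$ then forces $z_2=\mu e_z$ with $\mu\in\{0,2\}$, recovering exactly the two stated equilibria.

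For item 2 I would pass to the coordinates $\bar z_1:=z_1-\tfrac{g_0}{\alpha}z_2$ and $v:=e_z-z_2\in\mathbb{S}^2$, in which $\alpha z_1-g_0z_2=\alpha\bar z_1$ and $\|z_2\|^2=2(1-v\cdot e_z)$, so $V=\tfrac{\alpha^2}{2}\|\bar z_1\|^2+g_0^2(1-v\cdot e_z)$. A careful computation of $\dot V$ along the dynamics---this is the main algebraic obstacle of the proof, since the simplification only emerges after exploiting the identity $v^T S^2(v)=0$ systematically---should yield
\[
\dot V=-\alpha(\alpha^2-g_0\beta)\|\bar z_1\|^2-\alpha g_0\beta(v\cdot\bar z_1)^2-\tfrac{g_0^3\beta}{\alpha}\bigl(1-(v\cdot e_z)^2\bigr),
\]
whose three summands are simultaneously nonpositive under the standing hypothesis $\beta g_0<\alpha^2$. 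Since $V$ is radially unbounded and $z_2\in\mathbb{S}_{e_z}$ is bounded, trajectories are precompact, so LaSalle gives that the $\omega$-limit set lies in $\{\dot V=0\}=\{\bar z_1=0,\ v\cdot e_z=\pm 1\}$, which, using $\|v\|=1$, reduces to the two equilibria of item~1. They are isolated, so each trajectory converges to one of them.

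Item 3 then follows at once from $V(0,0)=0$ and $V(\tfrac{2g_0}{\alpha}e_z,2e_z)=2g_0^2$: $V_c$ is an open, positively invariant neighborhood of the origin excluding the spurious equilibrium, so item~2 applied inside $V_c$ yields local asymptotic stability with $V_c$ contained in the basin. For item 4 I would linearize at the spurious equilibrium by writing $v=-e_z+\epsilon$; the coordinate $e_z\cdot\bar z_1$ decouples with eigenvalue $-\alpha$, and the remaining $4$-dimensional block in $(\bar z_{1,xy},\epsilon)$ splits by planar symmetry into two identical $2\times 2$ systems whose determinant equals $-g_0\beta<0$, yielding one stable and one unstable eigenvalue per copy. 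The spurious equilibrium is therefore a hyperbolic saddle whose global stable manifold is a $3$-dimensional immersed submanifold of the $5$-dimensional $\varUpsilon$, hence Lebesgue-null and of empty interior. Taking $\varUpsilon_0$ to be the basin of the origin---open by item~3, and with complement contained in that stable manifold by item~2---gives the desired open dense set.
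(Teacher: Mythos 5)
Your proposal is correct and follows essentially the same route as the paper: the same Lyapunov function $V$ (your claimed $\dot V$ in the $(\bar z_1,v)$ coordinates is, term by term, exactly the paper's expression (\ref{eq:D_Lyap-2}) after substituting $\alpha z_1-g_0z_2=\alpha\bar z_1$ and $z_2=e_z-v$), LaSalle with precompact trajectories for item 2, the invariant sublevel set $V<2g_0^2$ for item 3, and linearization at $(\tfrac{2g_0}{\alpha}e_z,2e_z)$ for item 4, where your per-plane $2\times2$ blocks reproduce the paper's quadratic factor $\lambda^{2}+\alpha(1-2G_{0})\lambda-g_{0}\beta$ with determinant $-g_0\beta<0$. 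The one notable difference is that your item 4 is more complete than the paper's: the paper stops after exhibiting a positive eigenvalue (instability of the spurious equilibrium), whereas your hyperbolic-saddle argument (a $3$-dimensional stable manifold inside the $5$-dimensional $\varUpsilon$, hence Lebesgue-null with empty interior, so the basin of the origin is open and dense) supplies precisely the step needed to pass from instability to the stated almost-global convergence.
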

\begin{proof}
Let's prove the four items of the theorem
\begin{enumerate}
\item The equilibria are calculated by solving the equation $F\left(\xi\right)=0$,
where $F$ is the nonlinear function describing (\ref{eq:error_dynamics1-1}),
we get the following
\begin{equation}
\begin{cases}
0 & =-\alpha z_{1}+g_{0}z_{2},\\
0 & =\beta S^{2}(e_{z}-z_{2})z_{1}.
\end{cases}
\end{equation}
The trivial solution is $(0,0)$ and the second solution is calculated
if we consider that $\left(z_{1},z_{2}\right)\neq\left(0,0\right)$,
so we can write
\begin{eqnarray}
z_{1} & = & \frac{g_{0}}{\alpha}z_{2},\label{equilibrium_y1}\\
0 & = & \beta\frac{g_{0}}{\alpha}S(e_{z}-z_{2})S(e_{z})z_{2}.\label{equilibirum_y2}
\end{eqnarray}
We know that $z_{2}\in\mathbb{S}_{e_{z}}$, so the only solution of
(\ref{equilibirum_y2}) is $z_{2}=2e_{z}$, which gives from (\ref{equilibrium_y1})
that $z_{1}=\frac{2g_{0}}{\alpha}e_{z}$. This completes the proof
of item 1.
\item The time derivative of (\ref{eq:Lyap-1}) in view of (\ref{eq:error_dynamics1-1})
yields {\footnotesize{}
\begin{align}
\dot{V}= & -\frac{g_{0}\beta}{\alpha}\left(\alpha z_{1}-g_{0}z_{2}\right)^{T}S^{2}(e_{z}-z_{2})\left(\alpha z_{1}-g_{0}z_{2}\right)\nonumber \\
 & -\alpha\left\Vert \alpha z_{1}-g_{0}z_{2}\right\Vert ^{2}+\frac{g_{0}^{3}\beta}{\alpha}z_{2}^{T}S^{2}(e_{z}-z_{2})z_{2}\nonumber \\
= & -\alpha\left(1-G_{0}\right)\left\Vert \alpha z_{1}-g_{0}z_{2}\right\Vert ^{2}+\alpha g_{0}^{2}G_{0}z_{2}^{T}S^{2}(e_{z})z_{2}\nonumber \\
 & \qquad-\alpha G_{0}\left(\left(\alpha z_{1}-g_{0}z_{2}\right)^{T}(e_{z}-z_{2})\right)^{2}.\label{eq:D_Lyap-2}
\end{align}
}where $G_{0}=\frac{\beta g_{0}}{\alpha^{2}}$, one easily verifies
that within the gain condition $\beta g_{0}<\alpha^{2}$ we have $\dot{V}<0$
if $(z_{1},z_{2})$ is not an equilibrium. Since (\ref{eq:error_dynamics1-1})
is autonomous and $V$ is radially unbounded, one can use LaSalle's
invariance theorem. Therefore, every trajectory converges to a trajectory
along which $\dot{V}\equiv0$.
\item Since $V$ is non-increasing, $V\left(\xi\right)<2g_{0}^{2}$ at $t=0$,
implies that $\left\Vert z_{2}(t)\right\Vert <2$ for every $t\geq0$.
Since the trajectory converges to one of the two equilibrium points,
it must be ($0,0$) because this is the only one contained in $V_{c}$.
\item The linearized system around the equilibrium $(\frac{2g_{0}}{\alpha}e_{z},2e_{z})$
is given by the following dynamics 
\begin{equation}
\dot{X}=AX,
\end{equation}
with $X=\left(\begin{array}{cc}
\left(z_{1}-\frac{2g_{0}}{\alpha}e_{z}\right)^{T} & \left(z_{2}-2e_{z}\right)^{T}\end{array}\right)^{T}$ and $A$ is a constant matrix having the form
\begin{equation}
A=\left[\begin{array}{ccc}
-\alpha I &  & g_{0}I\\
\\
\beta S^{2}(e_{z}) &  & -2\alpha G_{0}S^{2}(e_{z})
\end{array}\right]
\end{equation}
\end{enumerate}
The characteristic polynomial of the matrix $A$ is given by
\begin{equation}
P(\lambda)=\lambda\left(\lambda+\alpha\right)\left(\lambda^{2}+\alpha\left(1-2G_{0}\right)\lambda-g_{0}\beta\right)^{2}.
\end{equation}

We find that this polynomial has at least one positive root, which
is given by
\begin{equation}
\lambda=\alpha\frac{\sqrt{\left(1+4G_{0}^{2}\right)}-\left(1-2G_{0}\right)}{2}>0,
\end{equation}
which means that the equilibrium $(\frac{2g_{0}}{\alpha}e_{z},2e_{z})$
is unstable. This completes the proof of the theorem.
\end{proof}

\subsection{Local exponential convergence\label{subsec:Local-exponential-convergence}}

From equation (\ref{eq:D_Lyap-2}) we can write the following

\begin{equation}
\dot{V}\leq-\alpha\left(1-G_{0}\right)\left\Vert \alpha z_{1}-g_{0}z_{2}\right\Vert ^{2}+\alpha g_{0}^{2}G_{0}z_{2}^{T}S^{2}(e_{z})z_{2}
\end{equation}

In order to find the conditions of exponential convergence, let's
observe the following relations 

\begin{equation}
z_{2}^{T}S^{2}(e_{z})z_{2}=-\left\Vert z_{2}\right\Vert ^{2}+\frac{1}{4}\left(\left\Vert z_{2}\right\Vert ^{2}\right)^{2}
\end{equation}
\begin{align}
\dot{V} & \leq-\alpha\left(1-G_{0}\right)\left\Vert \alpha z_{1}-g_{0}z_{2}\right\Vert ^{2}\\
 & \qquad-\alpha g_{0}^{2}G_{0}\left(1-\frac{1}{4}\left\Vert z_{2}\right\Vert ^{2}\right)\left\Vert z_{2}\right\Vert ^{2}\nonumber 
\end{align}

In the case of $\left\Vert V(\xi)\right\Vert <2g_{0}^{2}$ at $t=0$,
we can say it exists a fixed $\epsilon>0$ such that $1-\frac{1}{4}\left\Vert z_{2}(t)\right\Vert ^{2}>\epsilon$,
since the equilibrium which correspond to $\left\Vert z_{2}\right\Vert =2$
is non attractive, so we can write the following
\begin{equation}
\dot{V}\leq-\alpha\left(1-G_{0}\right)\left\Vert \alpha z_{1}-g_{0}z_{2}\right\Vert ^{2}-\alpha g_{0}^{2}G_{0}\epsilon\left\Vert z_{2}\right\Vert ^{2}
\end{equation}
\begin{equation}
\dot{V}\leq-min(\left(1-G_{0},G_{0}\epsilon\right)\alpha\left(\left\Vert \alpha z_{1}-g_{0}z_{2}\right\Vert ^{2}+g_{0}^{2}\left\Vert z_{2}\right\Vert ^{2}\right)
\end{equation}
which can be written as
\begin{equation}
\dot{V}\leq-2min\left(1-G_{0},G_{0}\epsilon\right)\alpha V
\end{equation}
which gives the following inequality
\begin{eqnarray}
V(t) & \leq V(0)e^{-2min\left(1-G_{0},G_{0}\epsilon\right)\alpha t}
\end{eqnarray}

This leads to the local exponential convergence of the errors to the
equilibrium $(0,0)$.

\section{Simulations\label{sec:Simulations}}

In this section, we present simulation results showing the effectiveness
of the proposed estimator. We generated the signal $^{c}\omega_{s}$
with trigonometric functions and generated the trajectory of $^{c}R_{s}$
by integration. Figure \ref{fig:controlframe} shows time plot of
$^{c}R_{s}$ represented by roll, pitch and yaw angles. We generated
the trajectory of $^{c}p_{s}$ by integrating the signal $^{c}\dot{p}_{s}$
which is the sum of filtered noise and a linear feedback loop to maintain
$^{c}p_{s}$ around the value $\left(0,0,1.3\right)$. Finally we
generated $\dot{\omega}_{c}$ signals using trigonometric functions
and obtained $\omega_{c}$ and $R_{c}$ trajectories by integration.
Afterwards we generated the measurement signals $y_{a}$ for accelerometer
and $y_{g}$ for gyrometer using equations (\ref{eq:measure-gyro1})
and (\ref{eq:measure-accelero-1}).

We have considered for the simulations the initial conditions for
the estimator which correspond to the initial errors $\tilde{x}_{1}(0)=0$
and $\tilde{x}_{2}(0)=\left(\begin{array}{ccc}
-1.87 & 0.28 & 0.39\end{array}\right)^{T}$. The parameters of the estimator have been chosen as $\alpha=19.8$
and $\beta=10$, so the condition ($G_{0}=\frac{g_{0}\beta}{\alpha^{2}}<1$)
is verified. We performed two simulation tests, one without considering
noise and one with white centered Gaussian noise with standard deviation
of $0.04$ (normalized) added to the three elements of vector measurements
$y_{g}$ and with standard deviation of $0.2$ (normalized) added
to the three elements of vector measurements $y_{a}$. 

Figure \ref{fig:x1tilde} on top and bottom shows the evolution of
the estimation errors $\tilde{x}_{1}$ without noise and with noise,
respectively. Figure~\ref{fig:x2} and Figure~\ref{fig:x2noise}
show the estimation tracking of the variable $x_{2}$ and the estimation
errors $\tilde{x}_{2}$ with respect to time, without and with noise
respectively. We can see that the estimation error converges to zero
in about one second. For the noisy case, even if the estimation error
$\tilde{x}_{1}$ shows some sensitivity, we see that the error $\tilde{x}_{2}$
filters this noise in a relatively efficient way. These two figures,
\ref{fig:x2} and~\ref{fig:x2noise} compare also our results with
the observer of Hua et al\emph{ }\cite{Hua2016automatica} with equivalent
gains ($K_{1}^{v}=\alpha=19.80\text{, }K_{2}^{r}=\alpha=19.80\text{, and }K_{1}^{r}=\beta=10$,
with $K_{1}^{v}\text{, }K_{2}^{r}\text{, and }K_{1}^{r}$ parameters
of their estimator), labeled as `comparison'. We see clearly that
the performances of that estimator are not as good, in both clean
and noisy cases, as the presented one, with twice longer convergence
times.

\begin{figure}
\includegraphics[width=1\columnwidth]{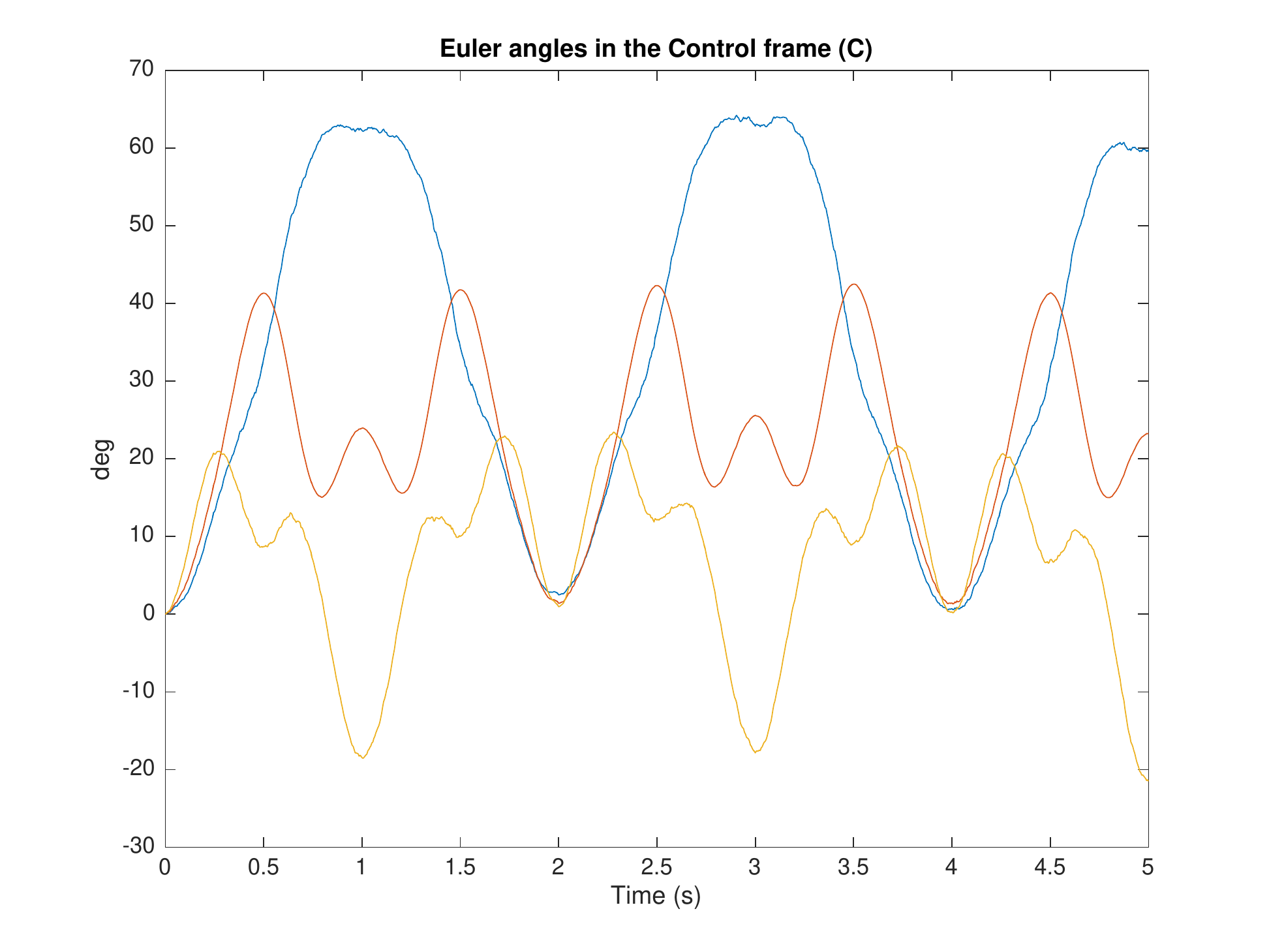}

\caption{Plot showing the values of the orientation of the IMU in the control
frame $\left(\mathcal{C}\right)$ expressed using Euler angles (blue:
roll, red: pitch, orange: yaw)\label{fig:controlframe}}
\end{figure}

\begin{figure}
\includegraphics[width=1\columnwidth]{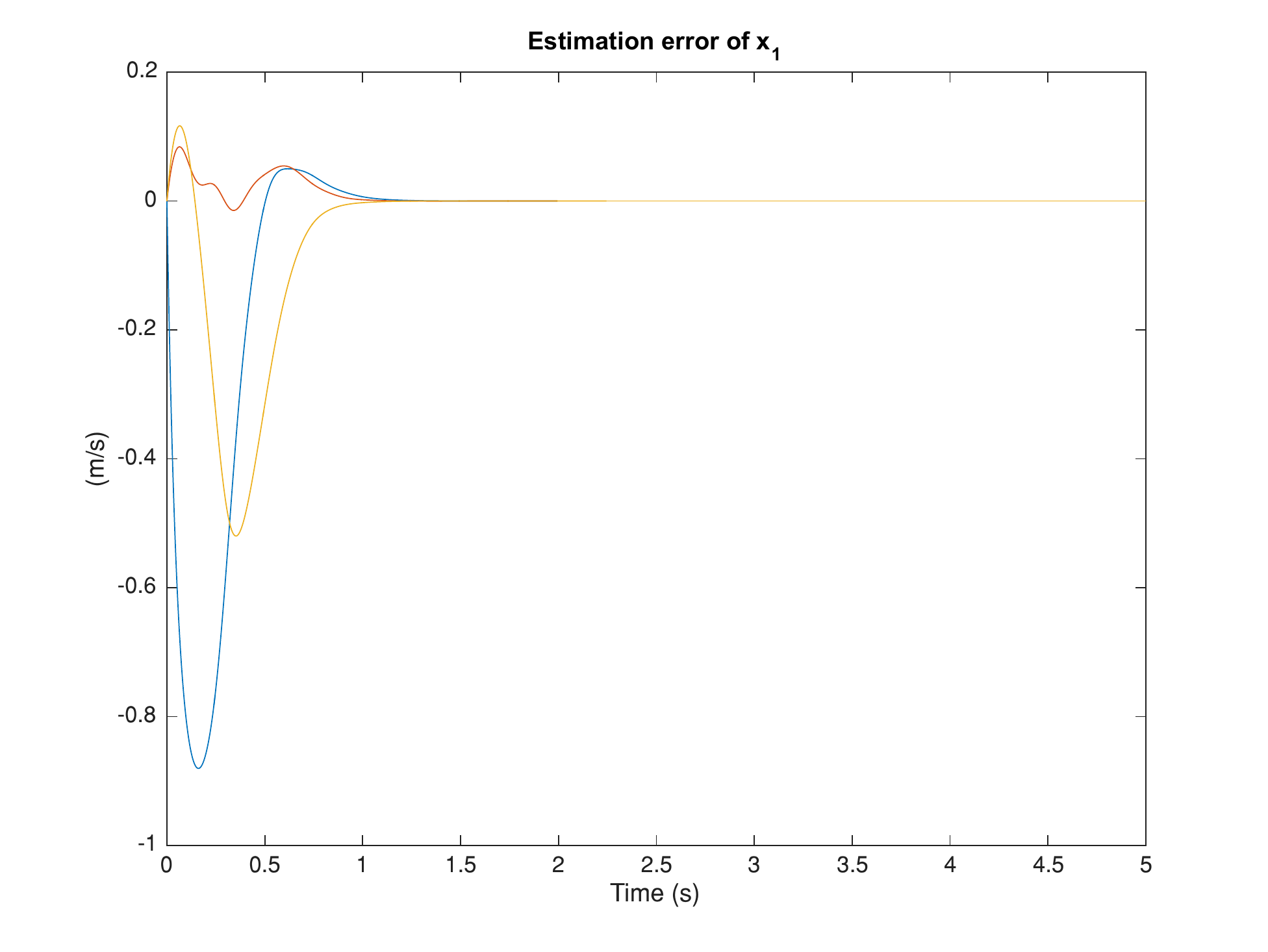} 

\includegraphics[width=1\columnwidth]{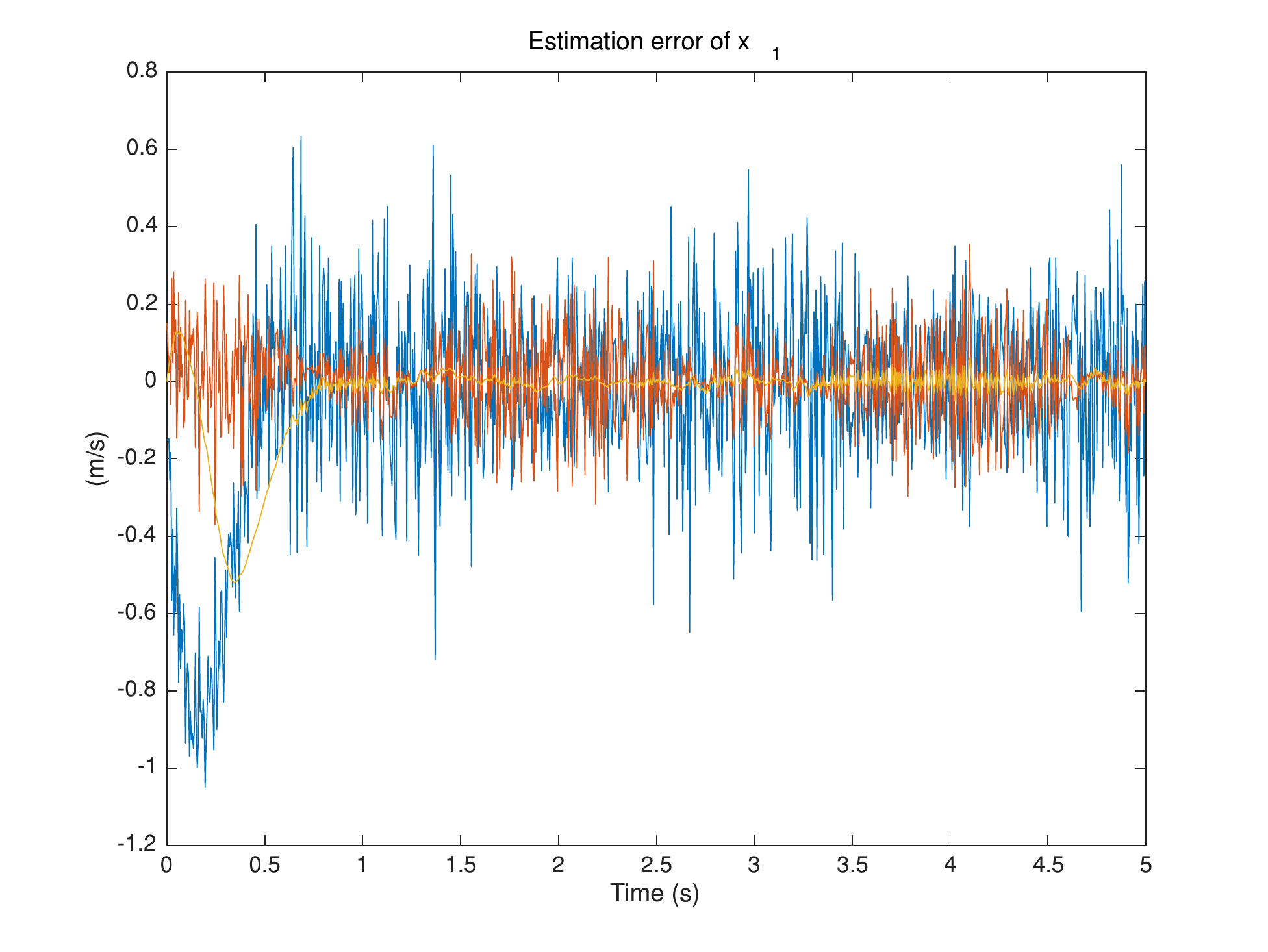}

\caption{Plot showing the estimation error for $x_{1}$. On the top, the case
where there is no noise, and on the bottom the noisy case. The colors
blue, red, orange represent the three components of this vector respectively.
\label{fig:x1tilde}}
\end{figure}

\begin{figure}
\includegraphics[width=0.97\columnwidth]{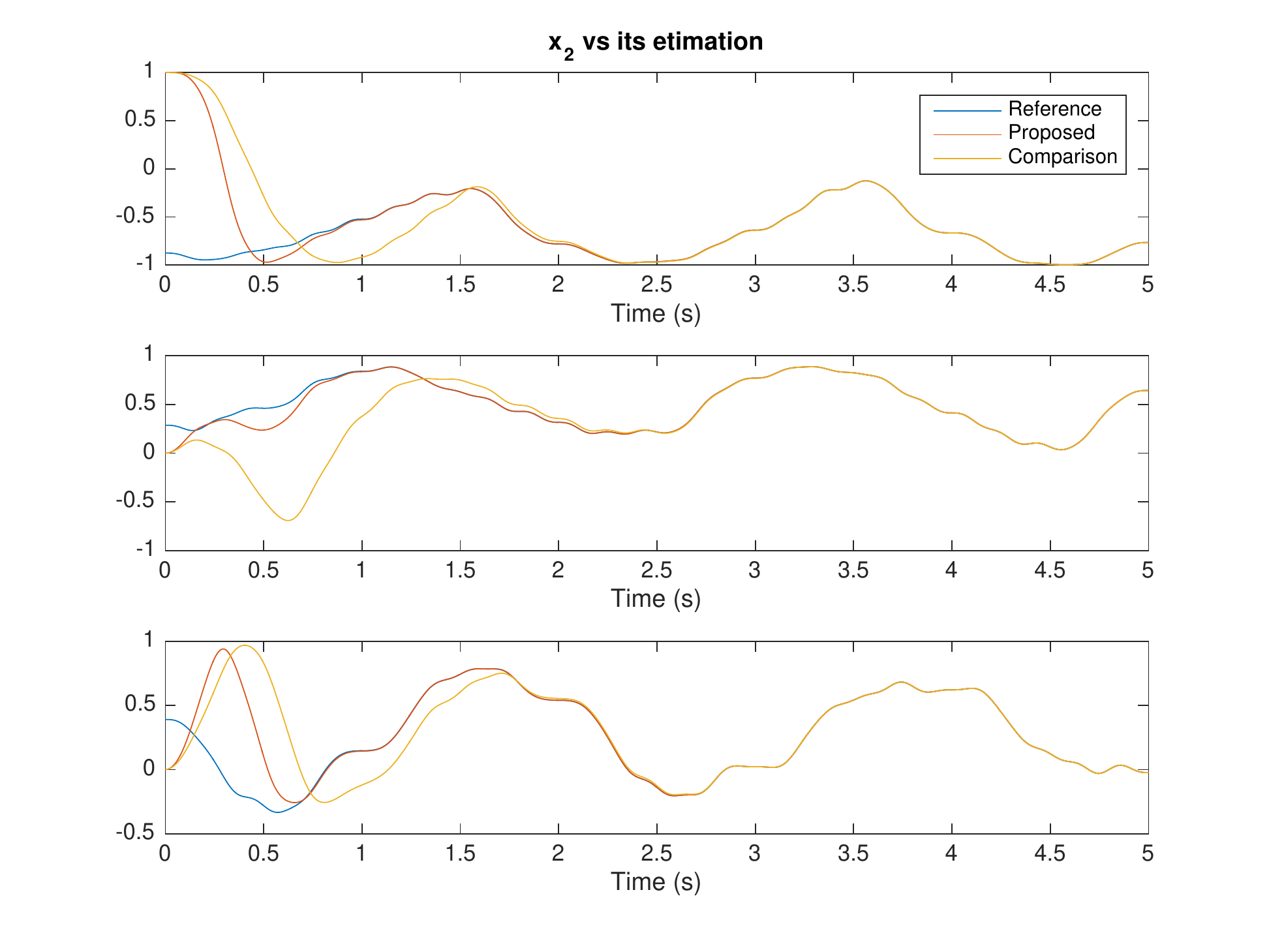}

\includegraphics[width=0.97\columnwidth]{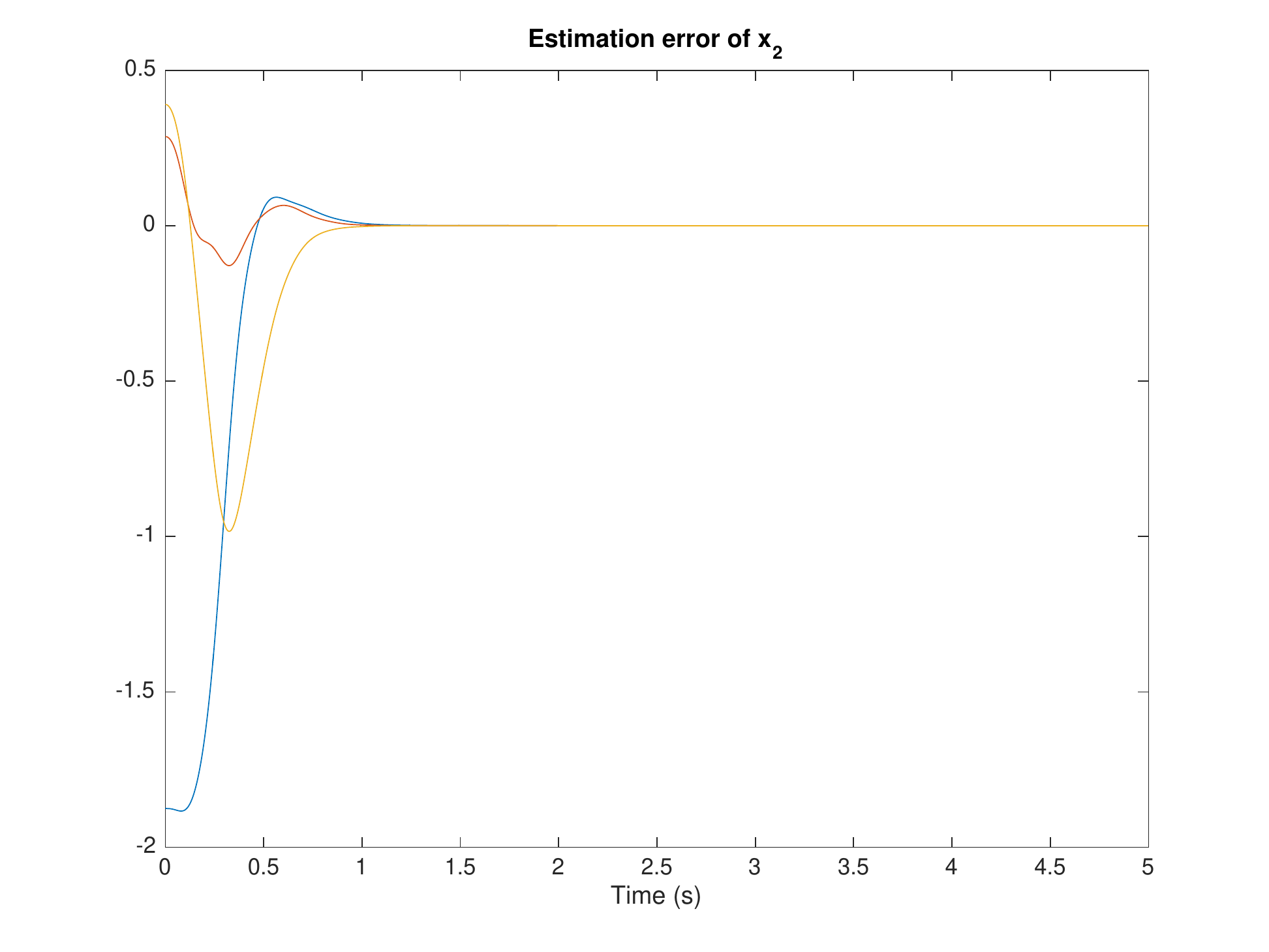}

\caption{Estimation of $x_{2}$ in the case without noise. On the top, three
plots showing a comparison between respective components of $x_{2}$
in blue, its estimation $\hat{x}_{2}$ in red and a comparison with
Hua's observer in yellow. On the bottom we see the evolution of our
estimation error. The colors blue, red, orange represent the three
components of this vector respectively. \label{fig:x2}}
\end{figure}

\begin{figure}
\includegraphics[width=0.97\columnwidth]{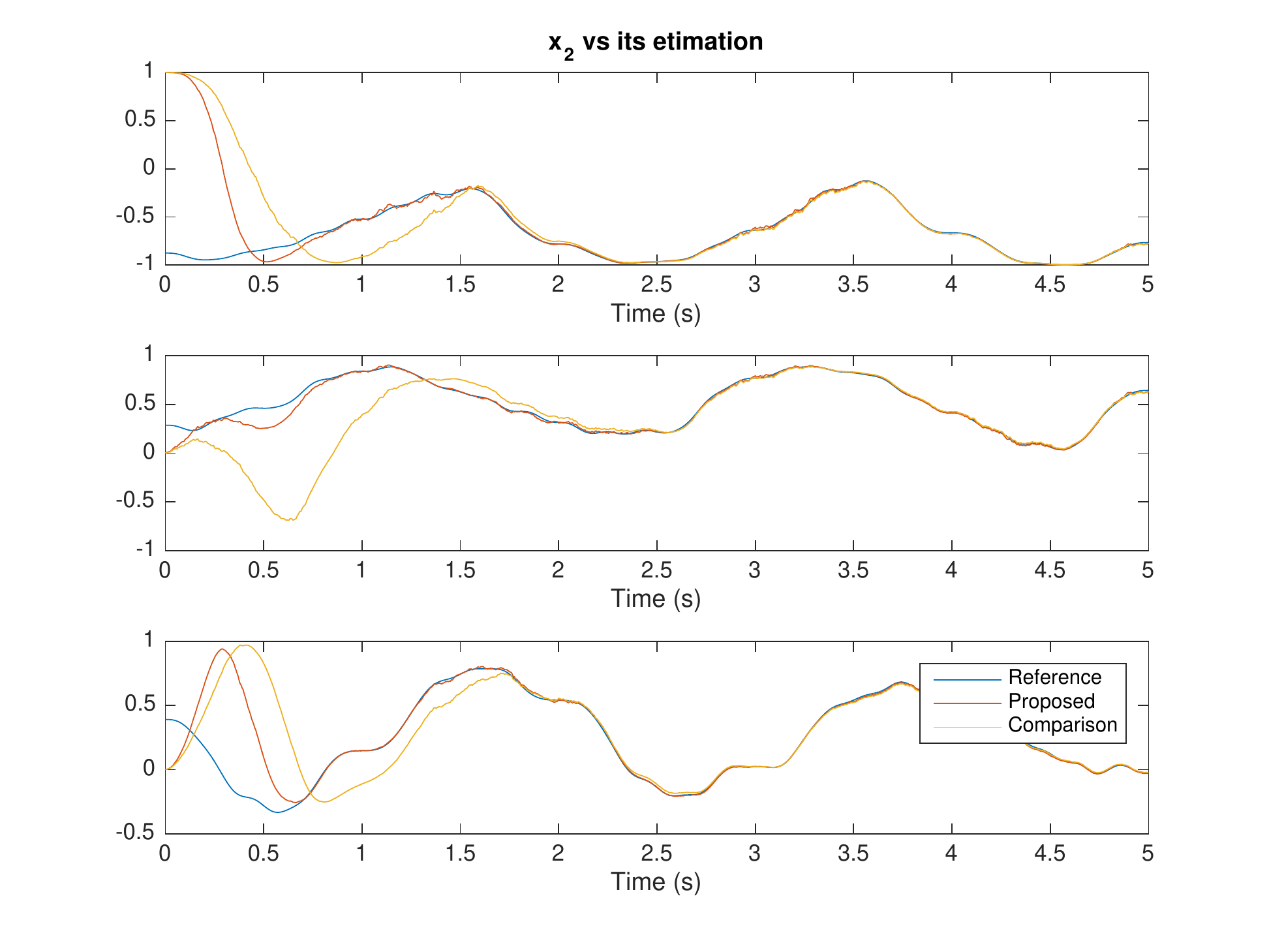}

\includegraphics[width=0.97\columnwidth]{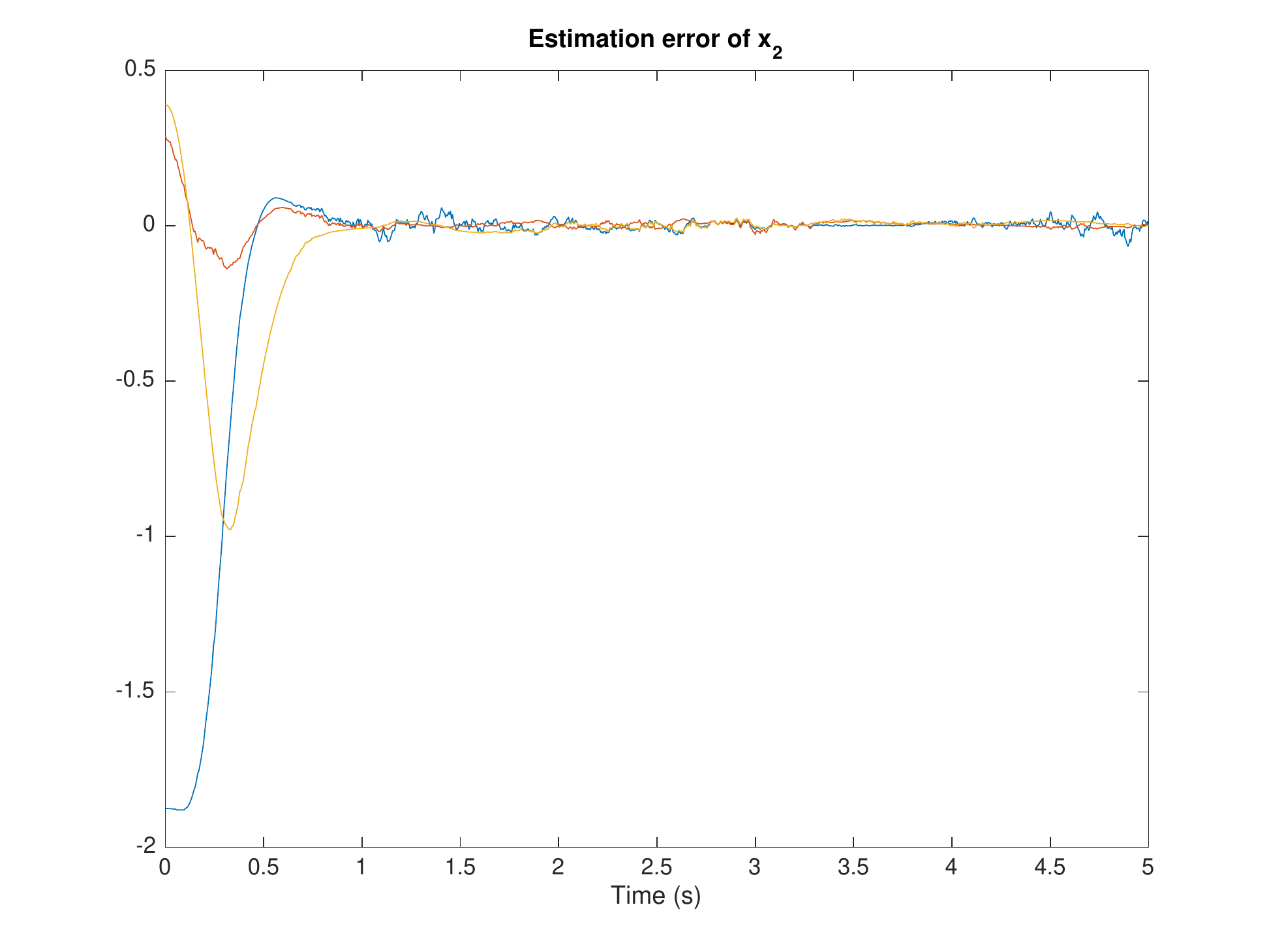}

\caption{Estimation of $x_{2}$ in the noisy case. On the top, three plots
showing a comparison between respective components of $x_{2}$ in
blue and its estimation $\hat{x}_{2}$ in red and a comparison with
Hua's observer in yellow. On the bottom we see the evolution of our
estimation error $\tilde{x}_{2}$. The colors blue, red, orange represent
the three components of this vector respectively.\label{fig:x2noise}}
\end{figure}

\section{Discussion and conclusion\label{sec:Discussion-and-conclusion}}

Attitude estimation is a topic of active research, especially when
IMU signals are used. Accelerometers are at the core of this problem
mainly because their signal contains the value of the gravitational
field in the frame of the sensor. In static cases, this property allows
for algebraic tilt measurements. However, in the dynamic cases, this
measurement is mixed with the linear acceleration in an algebraically
indistinguishable way. In many works the acceleration is considered
negligible compared to gravity field~\cite{Zheng2010}, and is therefore
considered as a noise. Filtering approaches are commonly used to remove
this signal~\cite{1273529}. Accelerometers are also commonly used
together with gyrometers. Gyrometers provide rotation velocities in
the local reference frame. Their signals are commonly merged with
accelerometers using Kalman Filtering~\cite{803999}, but are often
exploited to correct the filtered accelerometer signals using complementary
filtering~\cite{4608934}.

Several other works rely on the presence of additional data to reconstruct
the attitude. For instance, magnetometers~\cite{metni2006attitude}
or vision~\cite{martinelli2012vision} can be used to retrieve redundant
attitude signals allowing to reduce the effect of accelerometer errors.
Finally, a fusion with external measurements such as GPS~\cite{caron2006gps}
or landmark relative position~\cite{vasconcelos2010nonlinear} allow
to better distinguish the linear acceleration from gravitational field
measurements and allows to observe the linear part of the kinematics.

We see through this summary that the translational component of the
motion of the IMU is commonly considered either as a noise that requires
to be deleted or as an independent dynamics which needs to be observed.
However, in the specific case of the pendulum, this linear part of
the kinematics is coupled with the angular motion which explains the
presence of the angular velocity and event angular acceleration in
the signals of the accelerometer (see Equation~(\ref{eq:measure-accelero-1})).
This enables us to use this signal without any need of filtering and
to still be able to reconstruct tilt despite a high level noise level.
The translation-rotation coupling is entirely due to the presence
of the anchor point of pivot. However, in several works addressing
cases similar to pivot link position estimation are still resorting
to classical methods where the IMU is considered as an unconstrained
floating object, even if the reconstructed attitude are merged with
encoder data afterwards~\cite{khandelwal2013estimation}. It is worth
to note that in addition to orientation, the orientation estimation
of a pendulum provides also data on the position of the limbs of the
robot, because of the pivot constraint. This relationship allows also
to design position controllers on the base of attitude estimators,
similarly to hand position compensation presented in~\cite{benallegue2014humanoids}.

Only few works dealt with attitude estimation taking into account
the pivot constraints. One example is the tilt estimation for rigid
pendulum around the pivot using multiple accelerometers \cite{Trimpe2010}.
This observer was used especially for balancing the reaction wheel
cube on edges and corners \cite{Gajamohan2013}. In addition to the
requirement of multiple accelerometers at different locations is only
limited to rigid pendulum cases. Another work from legged robotics
community considers also contact information~\cite{Bloesch-RSS-12}.
This estimator considers the case of multiple contacts and uses an
extended Kalman Filter. The contact information is introduced in the
model kinematics but only at the prediction step rather than as a
constraint. Their model is intended to take into account the cases
of contact slippage, but this variable is not observable using inertial
sensors. Another work uses also extended Kalman Filtering for a humanoid
robot having flexible contacts with the environment~\cite{benallegue2014humanoids}.
The contact information was introduced as pseudo measurements in order
to allow the pivot constraint to be slightly violated. This observer
was extended to take into account the dynamical model of the flexibility~\cite{mifsud:hal-01142399}.
However the use of extended Kalman filtering only provides the guarantee
of optimality around the linearized dynamics around the predicted
state and gives no proof of convergence.

To our knowledge, our estimator is the only one providing almost globally
convergent tilt estimation for non-rigid pendulum system. In fact
the only work we know which can be adapted to these cases is the observer
of\emph{ }\cite{Hua2016automatica}\emph{ }where velocity data are
used to allow tilt estimation. However, this estimator is more complex
than the observer we propose and its convergence properties are weaker,
since no minimal set is provided to guarantee the exponential convergence.
We could see also that their gain condition for stability lead the
system to converge slower than\emph{ }our estimator\emph{. }

With this kind of estimators the only orientation data missing is
the orientation around the gravitational field direction, or yaw angle.
This orientation is proven to be out of reach of this measurement
system. Therefore, the involvement of other sensors such as magnetometers
are necessary to obtain this estimate. The addition of this kind of
sensors is the topic of a possible improvement of the presented method. 

Finally, the introduction of a model for the dynamics of the pivot
could also increase the quality of the observation, specifically by
creating coupling between the measurement data of the IMU and other
values which are non-observable otherwise. These values include yaw
angle without needing additional data, but may go to the estimation
of contact forces with the environment~\cite{mifsud:hal-01142399}.
This is also the topic of next developments regarding this kind of
systems.

\bibliographystyle{plain}
\bibliography{biblio}

\begin{thebibliography}{10}

\bibitem{benallegue2014humanoids}
M.~Benallegue and F.~Lamiraux.
\newblock {Humanoid flexibility deformation can be efficiently estimated using
  only inertial measurement units and contact information}.
\newblock In {\em International Conference on Humanoid Robots}, pages 246--251,
  2014.

\bibitem{Bloesch-RSS-12}
M.~Bloesch, M.~Hutter, M.~Hoepflinger, S.~Leutenegger, C.~Gehring, C.~D. Remy,
  and R.~Siegwart.
\newblock {State Estimation for Legged Robots - Consistent Fusion of Leg
  Kinematics and {\{}IMU{\}}}.
\newblock In {\em Proceedings of Robotics: Science and Systems}, Sydney,
  Australia, 2012.

\bibitem{caron2006gps}
F.~Caron, E.~Duflos, D.~Pomorski, and P.~Vanheeghe.
\newblock {GPS/IMU data fusion using multisensor Kalman filtering: introduction
  of contextual aspects}.
\newblock {\em Information fusion}, 7(2):221--230, 2006.

\bibitem{Gajamohan2013}
M.~Gajamohan, M.~Muehlebach, T.~Widmer, and R.~{D 'andrea}.
\newblock {The Cubli: A Reaction Wheel Based 3D Inverted Pendulum}.
\newblock 2013.

\bibitem{Hua2016automatica}
M.-D. Hua, P.~Martin, and T.~Hamel.
\newblock {Stability analysis of velocity-aided attitude observers for
  accelerated vehicles}.
\newblock {\em Automatica}, 63:11--15, jan 2016.

\bibitem{khandelwal2013estimation}
S.~Khandelwal and C.~Chevallereau.
\newblock {Estimation of the trunk attitude of a humanoid by data fusion of
  inertial sensors and joint encoders}.
\newblock In {\em The 16th International Conference on Climbing and Walking
  Robots}, pages 822--830, 2013.

\bibitem{1273529}
H.~J. Luinge and P.~H. Veltink.
\newblock {Inclination measurement of human movement using a 3-D accelerometer
  with autocalibration}.
\newblock {\em IEEE Transactions on Neural Systems and Rehabilitation
  Engineering}, 12(1):112--121, mar 2004.

\bibitem{803999}
H.~J. Luinge, P.~H. Veltink, and C.~T.~M. Baten.
\newblock {Estimation of orientation with gyroscopes and accelerometers}.
\newblock In {\em Proceedings of the First Joint BMES/EMBS Conference.},
  volume~2, page 844, 1999.

\bibitem{4608934}
R.~Mahony, T.~Hamel, and J.~M. Pflimlin.
\newblock {Nonlinear Complementary Filters on the Special Orthogonal Group}.
\newblock {\em IEEE Transactions on Automatic Control}, 53(5):1203--1218, jun
  2008.

\bibitem{martinelli2012vision}
A.~Martinelli and R.~Siegwart.
\newblock {Vision and IMU Data Fusion: Closed-Form Determination of the
  Absolute Scale, Speed, and Attitude}.
\newblock In {\em Handbook of Intelligent Vehicles}, pages 1335--1354.
  Springer, 2012.

\bibitem{metni2006attitude}
N.~Metni, J.-M. Pflimlin, T.~Hamel, and P.~Soueres.
\newblock {Attitude and gyro bias estimation for a VTOL UAV}.
\newblock {\em Control Engineering Practice}, 14(12):1511--1520, 2006.

\bibitem{mifsud:hal-01142399}
A.~Mifsud, M.~Benallegue, and F.~Lamiraux.
\newblock {Estimation of Contact Forces and Floating Base Kinematics of a
  Humanoid Robot Using Only Inertial Measurement Units}.
\newblock In {\em IEEE/RSJ International Conference on Intelligent Robots and
  Systems}, page 6p., sep 2015.

\bibitem{Trimpe2010}
S.~Trimpe and R.~D'Andrea.
\newblock {Accelerometer-based tilt estimation of a rigid body with only
  rotational degrees of freedom}.
\newblock In {\em 2010 IEEE International Conference on Robotics and
  Automation}, pages 2630--2636. IEEE, may 2010.

\bibitem{vasconcelos2010nonlinear}
J.~Fernandes Vasconcelos, R.~Cunha, C.~Silvestre, and P.~Oliveira.
\newblock {A nonlinear position and attitude observer on SE (3) using landmark
  measurements}.
\newblock {\em Systems {\&} Control Letters}, 59(3):155--166, 2010.

\bibitem{Wieber2016}
P.-B. Wieber, R.~Tedrake, and S.~Kuindersma.
\newblock {\em Springer Handbook of Robotics}, chapter {Modeling and Control of
  Legged Robots}, pages 1203--1234.
\newblock Springer International Publishing, Cham, 2016.

\bibitem{Zheng2010}
Q.~Zheng, Y.~Zhang, Y.~Lei, J.~Song, and Y.~Xu.
\newblock {Haircell-inspired capacitive accelerometer with both high
  sensitivity and broad dynamic range}.
\newblock {\em Proceedings of IEEE Sensors}, pages 1468--1473, 2010.

\end{thebibliography}

\end{document}